\newcommand\numberthis{\addtocounter{equation}{1}\tag{\theequation}}
\newtheorem{proposition}{Proposition}
\title{The Dependent Random Measures with Independent Increments in Mixture Models}
\author[1]{Cheng Luo\thanks{Email: 321luocheng@tongji.edu.cn}}
\author[2]{Richard Yi Da Xu\thanks{Email: YiDa.Xu@uts.edu.au}}
\author{Yang Xiang\thanks{Email: shxiangyang@tongji.edu.cn}}
\affil[1]{School of Electronic and Information Engineering, Tongji University}
\affil[2]{Faculty of Engineering and Information Technology, University of Technology, Syndey}
\begin{document}
\maketitle

\begin{abstract}
When observations are organized into groups where commonalties exist amongst them, the dependent random measures can be an ideal choice for modeling. One of the propositions of the dependent random measures is that the atoms of the posterior distribution are shared amongst groups,  and hence groups can borrow information from each other. When normalized dependent random measures prior with independent increments are applied, we can derive appropriate exchangeable probability partition function (EPPF), and subsequently also deduce its inference algorithm given any mixture model likelihood. We provide all necessary derivation and solution to this framework. For demonstration, we used mixture of Gaussians likelihood in combination with a dependent structure constructed by linear combinations of CRMs. 
Our experiments show superior performance when using this framework, where the inferred values including the mixing weights and the number of clusters both respond appropriately to the number of completely random measure used.

\textbf{Key words: Normalised dependent random measures; Bayesian Non-parameters; Linear mixed random measures; Mixture of Gaussians; Clustering.}
\end{abstract}

\section{Introduction}

Non-parametric statistical methods provide a very flexible framework for survival analysis and mixture models. Different from the classical Bayesian methods, the observations are assumed to be sampled from a probability random measure instead of a fixed probability distribution. One of the popular methods to define a probability random measure is using the normalization of a Completely Random Measure (CRM) \cite{Kingman1995}. To derive a CRM $\mu$, one needs to fix a base measure $H$ and define the measure $\mu(A)$ for all measurable sets $A$ to be a random variable and the random measures $\mu(A_1),...,\mu(A_n)$ are independent for any disjoint measurable sets $A_1,...,A_n$ \cite{Kingman1967}. The probability measure $\tilde{p}$ is hence defined by $\tilde{p} = \mu / \mu(\mathbb{X})$, where $\mathbb{X}$ is the space $\mu$ resides on. When the base measure $H$ is assumed to be a diffusion measure (a measure without atoms), the  sample drawn from $\tilde{p}$ are different almost surely. 

For observations organized in groups, a natural assumption is that some of the observations share the same atoms across groups whereas others do not. Obviously, defining a single random measure across all groups is insufficient in this setting. Consequently, we should define a vector of dependent random measures $(\mu_1,...,\mu_d)$ where observations at a group $i$ is associated with its corresponding random measure $\mu_i$. The pioneering work to consider this problem is \cite{MacEachern1999}. Using the stick-breaking paradigm of the Dirichlet Process, they proposed a dependent Dirichlet Process by assuming the random masses are shared by all groups and the locations are independent. The alternative method of defining random measures have been proposed since then, where most of them adopted the dependent L\'evy measures. By  constructing of a special L\'evy measure, Leisen et al \cite{Leisen2013} proposed a vector of Dirichlet Processes in a way that every margin of the dependent random measures is a Dirichlet Process. By virtue of  L\'evy copula \cite{Kallsen2006},  a powerful tool of defining dependence structure of L\'evy processes, \cite{Epifani2010} proposed a formula to define dependent random measures through fixed margins of the L\'evy measures and a L\'evy copula. Another intuitive and simple example is the linear combinations of CRMs. For instance, Griffin et al \cite{Griffin2013} proposed the Correlated Normalized Random Measures with Independent Increments (CNRMI) by introducing a binary matrix $\mathbf{Q}$ and constructing the dependent random measures $(\boldsymbol{\tilde{\mu}})$ through $\boldsymbol{\tilde{\mu}} = \mathbf{Q}\boldsymbol{\mu}$, where $\boldsymbol{\mu}$ is a vector of independent CRMs. Similarly, by changing $\mathbf{Q}$ to be a non-negative matrix, \cite{Chen2013} proposed the Linear Mixed Normalized Random Measures (LMNR). The dependent mixture models defined by \cite{Lijoi2014a} can also be seen as a very special example of this class.

Hence we feel the need to provide a framework of mixture models when the observations are formed in groups, and each group is associated with a random measure. However, in order for the inference algorithm to be practically implemented, a further assumption is required: For disjoint measurable sets $A_1,...,A_n$ the random vectors $\boldsymbol{\tilde{\mu}}(A_1),...,\boldsymbol{\tilde{\mu}}(A_n)$ are required to be independent. This assumption has a direct consequence that the increments for the dependent random measures are independent, and hence the name CNRMI was used in \cite{Griffin2013}.

Followed by the pioneering work of \cite{Ferguson1973}, Dirichlet Process is the first and one of most important stochastic processes introduced to Non-parametric Bayesian community. It is a special case of the class of normalized random measures. When the CRM $\mu$ is defined to be the Gamma subordinator \cite{Ferguson1973}\cite{Pitman1997}, the normalized random measure $\tilde{p} = \mu/\mu(\mathbb{X})$ becomes  Dirichlet Process. The posterior deduction and the inference of the Dirichlet Process is well studied by previous works, see \cite{Ferguson1973}\cite{Ishwaran2000}\cite{Neal2000}\cite{Walker2007} et al for detail. However, the posterior analysis for common normalized random measures is challenging. By introducing an auxiliary variable, \cite{James2008} proposed the first inference algorithm of sampling from a common normalized random measure. Under this framework, mixture models can be easily implemented when the prior is assumed to be the normalized $\sigma$-stable process and or the normalized generalized Gamma Process \cite{Lijoi2007}\cite{Favaro2013}. Using a similar methodology applied to the posterior analysis of the normalized random measures, one obtains the framework for the posterior analysis of dependent random measures with the help of the Exchangeable Partition Probability Function (EPPF). The work in \cite{Leisen2013}\cite{Lijoi2014} can be seen as special cases of this framework. 

In this paper, we summarize the posterior analysis of the Dependent Random Measures with Independent Increments (DRMI) and show how to apply this framework to the infinite mixture of Gaussians. We used the M as illustrative example of this framework, but it is, of course can be substituted by other conjugate mixture models. Different from the methods proposed by \cite{Griffin2013}\cite{Chen2013} our method resembles that of the Chinese Restaurant Process in a sense that, there is no need to incorporate a truncated number of activated atoms.

This paper is organized as follows. Section \ref{sec:theory} shows the EPPF of the DRMI. Section \ref{sec:inference} shows the inference of the DRMI and how to apply it to infinite mixture of Gaussians. In section \ref{sec:mcrm} we give the L\'evy measure of the Mixed Completely Random Measures (MCRM) and the details of the computation. The computational examples are given in section \ref{seq:exam} and the conclusion and future work are given in the last section.

\section{The dependent random measures with independent increments}\label{sec:theory}

Let $\mathbb{X}$ be a  separable completely  space, $\mathcal{B}(\mathbb{X})$ be the Borel $\sigma$-algebra defined on $\mathbb{X}$, and $(\tilde{\mu}_1,...,\tilde{\mu}_d)$ be a vector of CRMs defined on the measurable space $(\mathbb{X},\mathcal{B}(\mathbb{X}))$. The vector $(\mu_1,...,\mu_d)$ is called a \emph{dependent random measures with independent increments} if $\boldsymbol{\tilde{\mu}}(A_1),...,\boldsymbol{\tilde{\mu}}(A_n)$ are independent whenever the measurable sets $A_1,...,A_n$ are disjoint, where $\boldsymbol{\tilde{\mu}}(A_i) = (\tilde{\mu}_1(A_i),...,\tilde{\mu}_d(A_i))$ for $i=1,...,n$. 

Proved by \cite{Kingman1967}, a completely random measure can be decomposed into three parts: A fixed measure, a purely atomic random measure with finite atoms whose masses are random and locations are fixed and a completely random measure derived from a Poisson process. Using the L\'evy-Khintchine representation \cite{Sato}, the third part, say $\mu$, is determined by its L\'evy measure $\nu(ds)H(dx)$. If we ignore the first and second part for now, the Laplace functional of $\mu$ can be written as 
\begin{align*}
\mathbb{E}\left[e^{-\mu(f)}\right] = \exp\left\{-\int_{\mathbb{R}^+\times \mathbb{X}}(1 - e^{-sf(x)})\nu(ds)H(dx)\right\},
\end{align*}
where $\mu(f) = \int_{\mathbb{X}} f(x)\mu(dx) $, and $f$ is a measurable function almost surely. A multi-variational version  can be stated for DRMI that 
\begin{align}\label{eq:def_drmi}
\mathbb{E}\left[e^{-\sum_{i=1}^{d}\tilde{\mu}_i(f_i)}\right] = \exp\left\{-\int_{(\mathbb{R}^+)^d\times \mathbb{X}}(1 - e^{-\sum_{i=1}^{d}s_if_i(x)})\nu(ds_1,...,ds_d)H(dx)\right\},
\end{align}
where $f_i$ is measurable almost surely for $i=1,...,d$. 

Given a vector of DRMI, we can define a vector of \emph{normalized DRMI}, or NDRMI, as $\tilde{p}_i = \tilde{\mu}_i / \tilde{\mu}_i(\mathbb{X})$. The sample of a NDRMI is set of observations $\mathbf{X} = \{\boldsymbol{X}^{(1)},...,\boldsymbol{X}^{(d)}\}$, where $\boldsymbol{X}^{(i)} = \{X_{i,1},...,X_{i,n_i}\}$ and $X_{i,j}$ is a $\mathbb{X}$-valued random variable for $i=1,...,d$ and $j = 1,...,n_i$. Our assumption is that given a NDRMI, the random variables $X_{i,j}$ are independent with each other, or, a partially exchangeable proposition of the sample. Formally, let $C_{i,j} \in \mathcal{B}(\mathbb{X})$ be a measurable set for $i=1,...,d$ and $j = 1,...,n_i$, then the probability of the sample $\mathbf{X}$ is
\begin{align*}
\mathbb{P}(X_{i,j} \in C_{i,j}, ~ i =1,...,d,j=1,...,n_i) = \int \prod_{i=1}^{d}\prod_{j=1}^{n_i} \tilde{p}_i(C_{i,j}) \Phi(d\tilde{p}_1,...,d\tilde{p}_d),
\end{align*}
where $\Phi$ is the probability measure of $(\tilde{p}_1,...,\tilde{p}_d)$. When some of the $X_{i,j}$ take on the same value, for example, the sample $\mathbf{X}$ has $K$ distinct values $\{Y_1,...,Y_K\}$, and for each $k$ there are $q_{i,k}$ variables in group $i$ having the same value $Y_k$.  Suppose $C_1,...,C_K$ are disjoint measurable sets, then the probability of the sample can be rewritten as
\begin{align} \label{eq:EPPF_before}
\mathbb{P}(\mathbf{X}, Y_k \in C_{k}, ~ k=1,...,K)= \int \prod_{k=1}^K \prod_{i=1}^{d}\tilde{p}_i(C_k)^{q_{i,k}} \Phi(d\tilde{p}_1,...,d\tilde{p}_d).
\end{align}

The exchangeable partition probability function plays a key role in the Bayesian analysis of the mixture models. Pitman \cite{Pitman1995} gave a formal definition of the \emph{partially exchangeable probability function} and the EPPF for the Poisson-Dirichlet Process. In the field of the NDRMI, one of the major difference (with respect to a single random measure) is that the probability distribution of each partition is a function of the counts of all the groups. That is, for a special partition $k$, there is a joint density of $q_{i,k}$ for all the $i = 1,...,d$. Hence, the EPPF of the NDRMI is 
\begin{align}\label{eq:EPPF}
\Pi(K,\{n_i, q_{i,k}, ~ i = 1,...,d, k = 1,...,K\}) = \int_{\mathbb{X}^K} \mathbb{E}\left[\prod_{k=1}^{K} \prod_{i=1}^{d} \tilde{p}_i(dx)^{q_{i,k}}\right],
\end{align}
where $K$ is the number of partitions, $n_i$ is the number of observations in each group, and $q_{i,k}$ is the number of observations in group $i$ for partition $k$. To derive the expression of the EPPF of the NDRMI (\ref{eq:EPPF}), we only need to set $C_k \coloneqq C_{k,\epsilon} = \{y:d(Y_k,y) < \epsilon\}$ and let $\epsilon \downarrow 0$ in equation (\ref{eq:EPPF_before}). 

Following \cite{James2008} we substitute $\tilde{p}_i$ with $\tilde{\mu}_i/T_i$ where $T_i = \tilde{\mu}(\mathbb{X})$. Then we remove the denominator by introducing an auxiliary variable $U_i$ with the fact that 
\begin{align*}
T_i^{-n_i} = \frac{1}{\Gamma(n_i)} \int_0^\infty u^{n_i-1} e^{-T_i u_i} du_i.
\end{align*}
The auxiliary variables $u_1,...,u_d$ plays a key role in the expression of the EPPF. Through a few lines of deduction, we can show that the EPPF of the NDRMI can be stated by the following proposition, where the details of the proof is left in the appendix.
\begin{proposition}\label{prop}
Let any positive integers $K, n_1,...,n_d$ such that $K \leq \sum_{i=1}^{d} n_i $, the EPPF of the NDRMI is 
\begin{align*}
&\Pi(K,n_i, q_{i,k}, ~ i = 1,...,d, k = 1,...,K) = \\
&~~~~~~~~~~~~~~~~~~~~~~~~~\int_{(\mathbb{R}^+)^d} \left( \prod_{i=1}^{d} \frac{u_i^{n_i-1}}{\Gamma(n_i)}\right) e^{-\psi(u_1,...,u_d)} \left(\prod_{k=1}^{K}\tau_{\boldsymbol{q}_k}(u_1,...,u_d)  \right) du_1,...,du_d,
\end{align*}
where $\boldsymbol{q}_k = (q_{1,k},...,q_{d,k})$ and 
\begin{align*}
\psi(u_1,...,u_d) &= \int_{(\mathbb{R}^d)} (1 - e^{-\sum_{i=1}^{d}u_is_i}) \nu(ds_1,...,ds_d), \\
\tau_{\boldsymbol{q}_k}(u_1,...,u_d) &= \int_{(\mathbb{R}^+)^d} e^{-\sum_{i=1}^{d}s_iu_i} \prod_{i=1}^{d} s_i^{q_{i,k}} \nu(ds_1,...,ds_d).
\end{align*}
\end{proposition}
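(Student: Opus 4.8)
The plan is to follow the auxiliary-variable strategy of \cite{James2008}, transported to the multivariate setting. Starting from (\ref{eq:EPPF}) I would first substitute $\tilde{p}_i = \tilde{\mu}_i/T_i$ and pull the normalising constants out of the product. Since $\sum_{k=1}^K q_{i,k} = n_i$, the powers of $T_i^{-1}$ collect into $\prod_{i=1}^d T_i^{-n_i}$, giving
\begin{align*}
\Pi = \int_{\mathbb{X}^K} \mathbb{E}\!\left[\left(\prod_{i=1}^d T_i^{-n_i}\right) \prod_{k=1}^K \prod_{i=1}^d \tilde{\mu}_i(dx_k)^{q_{i,k}}\right].
\end{align*}
Next I would eliminate the denominators with the Gamma identity $T_i^{-n_i} = \Gamma(n_i)^{-1}\int_0^\infty u_i^{n_i-1} e^{-T_i u_i}\,du_i$, introducing the auxiliary variables $u_1,\dots,u_d$. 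After interchanging the $(\mathbb{R}^+)^d$, $\mathbb{X}^K$ and expectation integrals by Tonelli (all integrands being non-negative), the problem reduces to evaluating the inner expectation
\begin{align*}
M(u_1,\dots,u_d) = \int_{\mathbb{X}^K}\mathbb{E}\!\left[e^{-\sum_{i=1}^d u_i T_i}\prod_{k=1}^K \prod_{i=1}^d \tilde{\mu}_i(dx_k)^{q_{i,k}}\right],
\end{align*}
so that $\Pi = \int_{(\mathbb{R}^+)^d}\big(\prod_i u_i^{n_i-1}/\Gamma(n_i)\big)\,M(u_1,\dots,u_d)\,du_1\cdots du_d$.

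The core of the argument is the evaluation of $M$ through the Poisson structure underlying (\ref{eq:def_drmi}). I would represent the vector CRM as a Poisson random measure $N$ on $(\mathbb{R}^+)^d\times\mathbb{X}$ with intensity $\nu(ds_1,\dots,ds_d)H(dx)$, so that $\tilde{\mu}_i(dx) = \int_{(\mathbb{R}^+)^d} s_i\,N(ds,dx)$ and $T_i = \int_{(\mathbb{R}^+)^d\times\mathbb{X}} s_i\,N(ds,dx)$; consequently $\sum_i u_i T_i = \int \big(\sum_i u_i s_i\big)\,N(ds,dx)$ and $e^{-\sum_i u_i T_i} = \prod_{\text{atoms}} e^{-\sum_i u_i s_i}$. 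Working at the level of the sets $C_{k,\epsilon}$ of (\ref{eq:EPPF_before}) and then letting $\epsilon\downarrow 0$, I would apply the Mecke/Palm formula for Poisson processes $K$ times to extract one Poisson atom for each of the $K$ distinct locations. Because $H$ is diffuse, the $K$ locations are almost surely carried by $K$ distinct atoms, and the disintegration separates the integrand into the jointly contributed factor $\prod_i s_i^{q_{i,k}}\,e^{-\sum_i u_i s_i}$ of the $k$-th selected atom and the Laplace functional of the remaining background field.

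Collecting the pieces then gives the claimed factorisation. The background atoms reproduce the full Laplace functional of (\ref{eq:def_drmi}) evaluated at the constant test functions $f_i\equiv u_i$, namely $\exp\{-\int(1-e^{-\sum_i u_i s_i})\nu(ds)\} = e^{-\psi(u_1,\dots,u_d)}$. Integrating the mass vector of the $k$-th selected atom against $\nu$ yields exactly $\int_{(\mathbb{R}^+)^d} e^{-\sum_i u_i s_i}\prod_i s_i^{q_{i,k}}\,\nu(ds) = \tau_{\boldsymbol{q}_k}(u_1,\dots,u_d)$, while each of the $K$ location integrals over $\mathbb{X}$ contributes the factor $H(\mathbb{X})=1$ since $H$ is a probability measure. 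Hence $M(u_1,\dots,u_d) = e^{-\psi(u_1,\dots,u_d)}\prod_{k=1}^K \tau_{\boldsymbol{q}_k}(u_1,\dots,u_d)$, and substituting back into the expression for $\Pi$ produces the stated formula.

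I expect the main obstacle to be the rigorous separation of the $K$ chosen atoms from the background in the evaluation of $M$, i.e. justifying the iterated Palm disintegration and the $\epsilon\downarrow 0$ limit. The delicate points are showing that, as $\epsilon\downarrow0$, the contribution of configurations with two or more atoms inside a single $C_{k,\epsilon}$ is negligible, so that each cluster is captured by a single jump, and that the selected atoms' own contribution to $T_i$ is correctly absorbed into the factor $e^{-\sum_i u_i s_i}$ inside $\tau_{\boldsymbol{q}_k}$ rather than into $e^{-\psi}$. Both rely essentially on the diffuseness of $H$ and on the independent-increments property built into the definition of the DRMI, which is what makes the clean product-over-clusters structure emerge.
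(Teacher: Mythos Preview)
Your proposal is correct and reaches the same destination as the paper, but by a genuinely different route. After the common first step (substituting $\tilde p_i=\tilde\mu_i/T_i$ and introducing the auxiliary $u_i$ via the Gamma identity), the paper does \emph{not} invoke the Poisson representation or Palm/Mecke calculus. Instead it partitions $\mathbb{X}=C_0\cup C_1\cup\dots\cup C_K$, uses the independent-increments assumption directly to factor the expectation as $\mathbb{E}[e^{-\sum_i u_i\tilde\mu_i(C_0)}]\prod_{k=1}^K\mathbb{E}[e^{-\sum_i u_i\tilde\mu_i(C_k)}\prod_i\tilde\mu_i(C_k)^{q_{i,k}}]$, and then evaluates each $k$-factor analytically: it is recognised as $(-1)^{\sum_i q_{i,k}}$ times a mixed partial derivative of the Laplace transform (\ref{eq:def_drmi}), and the multivariate Fa\`a di Bruno formula is applied to that composite exponential. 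Only the term linear in $H(C_k)$ survives the $\epsilon\downarrow0$ limit, producing $\tau_{\boldsymbol q_k}$, while the $C_0$ factor tends to $e^{-\psi}$.

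Your Palm/Mecke argument is the probabilistic counterpart: it explains structurally why the product over clusters appears (one Poisson atom per distinct value) and sidesteps the combinatorics of Fa\`a di Bruno, at the cost of having to justify the iterated Palm disintegration and the single-atom-per-ball limit you flag. The paper's approach trades that for purely analytic manipulations of the known Laplace functional, with the higher-order $o(H(C_k))$ terms handled in one stroke by Fa\`a di Bruno. Either argument is sound; yours is arguably more transparent, the paper's more self-contained given only (\ref{eq:def_drmi}).
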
 
In light of the Proposition \ref{prop}, we can give a closed form of the inference algorithm for the mixture model with the help of auxiliary variables. Different from the algorithms in \cite{Griffin2013}\cite{Chen2013}, this algorithm is a parallel to the Chinese restaurant process, where the random measures $\tilde{\mu}_1,...,\tilde{\mu}_d$ are integrated out.

\section{Inference}\label{sec:inference}

Suppose the sample $\mathbf{X}$ has $d$ groups and there are $n_i$ observations in group $i$. Suppose further the sample $\mathbf{X}$ has $K$ distinct values, and for each $k$ there are $q_{i,k}$ observations in group $i$ having the value $Y_k$. Now we want to know the probability of a new observation $X_{i,n_i+1}$ to be equal to $Y_k$ for $k=1,...,K$ and the probability of $X_{i,n_i+1}$ to be equal to some value new. By virtual of Proposition \ref{prop}, we can write 
\begin{align*}
\mathbb{P}(X_{i,n_i+1} = Y_k, u_1,...,u_d) &= \frac{\Pi(K, n_i + 1, q_{i,k} + 1,n_j, q_{j,k}, ~ j = 1,...,d, j \neq i, k = 1,...,K)}{\Pi(K,n_j, q_{j,k}, ~ j = 1,...,d, k = 1,...,K)} \\
&\propto \frac{\tau_{\boldsymbol{q}_k + \delta_i}(u_1,...,u_d) }{\tau_{\boldsymbol{q}_k}(u_1,...,u_d) } \numberthis \label{eq:sample_k}
\end{align*}
and 
\begin{align*}
\mathbb{P}(X_{i,n_i+1} = Y_*, u_1,...,u_d) &= \frac{\Pi(K+1, n_i + 1, q_{i,k+1} = 1, n_j, q_{j,k}, ~ j = 1,...,d, j \neq i, k = 1,...,K)}{\Pi(K,n_j, q_{j,k}, ~ j = 1,...,d, k = 1,...,K)} \\
&\propto \tau_{\delta_i}(u_1,...,u_d) , \numberthis \label{eq:sample_new}
\end{align*}
where $\delta_i$ is a binary vector of length $d$ with all the elements equal to $0$ but the $i$-th which is $1$, and $Y_*$ denotes a new value sampled from $H(dx)$. To draw $X_{i,j}$ for all pairs of $(i,j)$ we need to apply the exchangeability of the sample. We keep the status of all the other observations but $X_{i,j}$ and sample it from the conditional probability (\ref{eq:sample_k}) and (\ref{eq:sample_new}) and repeat this procedure for all $(i,j)$.

In addition to the observations, we need also to sample the values of the auxiliary variables $u_1,...,u_d$. From Proposition \ref{prop}, we can see that the density of $u_i$ is propositional to
\begin{align}\label{eq:update_u}
u_i^{n_i-1} \psi(u_1,...,u_d) \prod_{k=1}^{K}\tau_{\boldsymbol{q}_k}(u_1,...,u_d) .
\end{align}

In real applications, it is not wise to assume the observations are sampled directly from some random measure, instead, we assume the parameters of the model is distributed as some random measure and add a likelihood function to link the observations and the parameters. Formally, let $\Theta$ be a completely and separable space and $\mathcal{B}(\Theta)$ be the Borel $\sigma$-algebra defined on $\Theta$. Suppose $(\tilde{p}_1,...,\tilde{p}_d)$ is a NDRMI defined on $(\Theta, \mathcal{B}(\Theta))$, the model is constructed as follows:
\begin{equation}\label{eq:model_1}
\begin{aligned}
X_{i,j}|\theta_{i,j} &\sim f(\cdot|\theta_{i,j}) \quad i = 1,...,d, ~ j = 1,...,n_i,\\
\theta_{i,j} |(\tilde{p}_1,...,\tilde{p}_d) &\sim \tilde{p}_i(\cdot) \quad i = 1,...,d, ~ j = 1,...,n_i, \\
(\tilde{p}_1,...,\tilde{p}_d) &\sim \textrm{NDRMI},
\end{aligned} 
\end{equation}
where $f$ is the likelihood density function.

However, model (\ref{eq:model_1}) suffers from slow mixing since the parameters $\theta_{i,j}$ are moved one by another even if some of them are having the same value. Following \cite{Neal2000}, we add indicators $c_{i,j}$ for each $X_{i,j}$ and sample $c_{i,j}$ by virtual of equations  (\ref{eq:sample_k}) and (\ref{eq:sample_new}). Then the parameters $\theta_k$ can be sampled once with all the observations taking on parameter $\theta_k$. Formally, we modify model (\ref{eq:model_1}) to
\begin{equation} 
\begin{aligned}
X_{i,j}|c_{i,j},\theta_{1},...,\theta_K &\sim f(\cdot|\theta_{c_{i,j}}) \\
c_{i,j}|(\tilde{p}_1,...,\tilde{p}_d) &\sim \tilde{p}_i(\cdot) \\
\theta_k &\sim H(\cdot) \\
(\tilde{p}_1,...,\tilde{p}_d) &\sim \textrm{NDRMI},
\end{aligned}
\end{equation}
where $K$ is the number of the activated clusters. 
Combine all these facts together, the inference is stated as follows:
\begin{enumerate}
\item For $i=1,...,d$ and $j = 1,...,n_i$, leave $c_{i,j}$ alone and compute the frequencies $q_{i,k}$ for all the other clusters with $i=1,...,d$ and $k = 1,...,K$, and sample $c_{i,j}$ with probability proportional to 
\begin{align} \label{eq:updata_c}
\left\{\begin{array}{ll}
f(x_{i,j}|\theta_{k})\frac{\tau_{\boldsymbol{q}_k + \delta_i}(u_1,...,u_d) }{\tau_{\boldsymbol{q}_k}(u_1,...,u_d) }, &\textrm{with $c_{i,j} = k$}\\
\int f(x_{i,j}|\theta) H(d\theta) \tau_{\delta_i}(u_1,...,u_d), &\textrm{with $c_{i,j} = new$}
\end{array}\right.
\end{align}
\item For $k = 1,...,K$ update $\theta_k$ with density
\begin{align*}
\left(\prod_{c_{i,j} = k} f(x_{i,j}|\theta) \right)H(d\theta)
\end{align*}
\item For $i = 1,...,d$, update $u_i$ with density (\ref{eq:update_u}).
\end{enumerate}

\section{The LMRM}\label{sec:mcrm}

According to the inference algorithm summarized in the last section, once the L\'evy measure $\nu(ds_1,...,ds_d)$ of the unnormalized dependent random measures $(\tilde{\mu}_1,...,\tilde{\mu}_d)$ is determined, the functions $\tau_{\boldsymbol{q}_k}(u_1,...,u_d)$ and $\psi(u_1,...,u_d)$ can be expressed analytically, and consequently, the updating of $c_{i,j}$ and $u_i$ can be derived. Hence we focus on the computation of the L\'evy measures. Following \cite{Chen2013}, the LMRM is constructed by linear combinations of CRMs. Formally, let $\mu_1,...,\mu_R$ be independent CRMs and $w_{i,r}$ be non-negative numbers for $i =1,..,d$ and $r = 1,...,R$. The LMRM is constructed by
\begin{align*}
\tilde{\mu}_i = \sum_{r=1}^{R} w_{i,r} \mu_r, \quad i = 1,...,d.
\end{align*}
It can be seen that when we set $w_{i,r} \in \{0,1\}$  the model of \cite{Griffin2013} is restored and when we set $R = 3$ and $w_{1,1} = w_{1,3} = 1, w_{1,2} = 0$ and $w_{2,2} = w_{2,3} = 1,w_{2,1} = 0$, the model of \cite{Lijoi2014a} is restored. However, to use the inference algorithm in the last section, we should give an explicit expression of the L\'evy measure of the LMRM. 

We further assume all the CRMs $\mu_1,...,\mu_R$ have the same L\'evy measure $\nu^*(ds)$, or, the direction of the LMRM. Then for any almost surely measurable functions $f_1,...,f_d$, we have the Laplace functional 
\begin{align*}
\mathbb{E}\left[e^{-\sum_{i=1}^{d}\tilde{\mu}_i(f_i)}\right] = \exp\left\{-\int_{\mathbb{R}^+\times\mathbb{X}}\sum_{r=1}^{R}(1-e^{-s\sum_{i=1}^{d}w_{i,r}f_i(x)}) \nu^*(ds)H(dx)  \right\}.
\end{align*}
Now we construct a measurable function 
\begin{align*}
g(w_1,...,w_d) = \left\{\begin{array}{ll}
1, &\textrm{if }(w_1,...,w_d) = (w_{1,r},...,w_{d,r}), r = 1,...,R,\\
0, &\textrm{otherwise},
\end{array}\right.
\end{align*}
and rewrite the Laplace functional of $(\tilde{\mu}_1,...,\tilde{\mu}_d)$ as
\begin{align*}
\mathbb{E}\left[e^{-\sum_{i=1}^{d}\tilde{\mu}_i(f_i)}\right] = \exp\left\{-\int_{\mathbb{R}^+\times\mathbb{X}}\int_{(\mathbb{R}^+)^d} (1-e^{-s\sum_{i=1}^{d}w_if_i(x)})g(dw_1,...,dw_d) \nu^*(ds)H(dx)  \right\} .
\end{align*}
If we substitute $s_i = w_i s$, the above equation is changed to
\begin{align*}
\mathbb{E}\left[e^{-\sum_{i=1}^{d}\tilde{\mu}_i(f_i)}\right] = \exp\left\{-\int_{\mathbb{R}^+\times\mathbb{X}}\int_{\mathbb{R}^+} (1-e^{-\sum_{i=1}^{d}s_if_i(x)})g(ds_1/s,...,ds_d/s) \nu^*(ds)H(dx)  \right\} .
\end{align*}
This gives us the L\'evy measure of $(\tilde{\mu}_1,...,\tilde{\mu}_d)$ as
$$
\nu(ds_1,...,ds_d) = \int_{\mathbb{R}^+} g(ds_1/s,...,ds_d/s) \nu^*(ds) = \sum_{r = 1}^{R} \int \mathds{1}_{w_{1,r}s,...,w_{d,r}s}(ds_1,...,ds_d) \nu^*(ds).
$$

The remaining is to show that for any disjoint measurable sets $A$ and $B$ the random vector $(\tilde{\mu}_1(A),...,\tilde{\mu}_d(A))$ and $(\tilde{\mu}_1(B),...,\tilde{\mu}_d(B))$ are independent. It is suffices to show that
\begin{align*}
\mathbb{E}\left[e^{-\sum_{i=1}^{d}\tilde{\mu}_i(A) - \sum_{i=1}^{d}\tilde{\mu}_i(B)}\right] = \mathbb{E}\left[e^{-\sum_{i=1}^{d}\tilde{\mu}_i(A)}\right] \mathbb{E}\left[e^{-\sum_{i=1}^{d}\tilde{\mu}_i(B)}\right].
\end{align*}
But this is obvious since 
\begin{align*}
\mathbb{E}\left[e^{-\sum_{i=1}^{d}\tilde{\mu}_i(A) - \sum_{i=1}^{d}\tilde{\mu}_i(B)}\right] &= \mathbb{E}\left[e^{-\sum_{i=1}^{d}\sum_{r= 1}^{R} w_{i,r}\mu_i(A) - \sum_{i=1}^{d}\sum_{r= 1}^{R} w_{i,r}\mu_i(B)}\right] \\
&= \mathbb{E}\left[e^{-\sum_{i=1}^{d}\sum_{r= 1}^{R} w_{i,r}\mu_i(A)}\right] \mathbb{E}\left[e^{-\sum_{i=1}^{d}\sum_{r= 1}^{R} w_{i,r}\mu_i(B)}\right]
\end{align*}
The last equation follows from the fact that $\mu_1,...,\mu_R$ are independent CRMs.

\subsection{The Gamma direction}
For a concrete example, we give the expression of $\tau_{q_{1,k},...,q_{d,k}}(u_1,...,u_d)$ and $\psi(u_1,...,u_d)$ when $\nu^*(ds)$ is the L\'evy measure of the Gamma  process, or
\begin{align}\label{eq:gamma_measure}
\nu^*(ds) = \alpha s^{-1}e^{-s}.
\end{align}
Recall that
\begin{align*}
\tau_{\boldsymbol{q}_k}(u_1,...,u_d) &= \int_{(\mathbb{R}^+)^d} e^{-\sum_{i=1}^{d}s_iu_i} \prod_{i=1}^{d} s_i^{q_{i,k}} \nu(ds_1,...,ds_d)
\end{align*}
and substitute 
$$
\nu(ds_1,...,ds_d) = \sum_{r=1}^{R} \int_0^\infty \mathds{1}_{w_{1,r}s,...w_{d,r}s}(ds_1,...,ds_d) \alpha s^{-1}e^{-s} ds
$$
gives the expression of $\tau_{\boldsymbol{q}_k}(u_1,...,u_d)$ which is
\begin{equation}\label{eq:diri_tau}
\tau_{\boldsymbol{q}_k}(u_1,...,u_d) = \Gamma(t_k)\sum_{r=1}^R  \frac{\prod_{i=1}^{d}w_{i,r}^{q_{i,k}}}{(h_r + 1)^{t_k}},
\end{equation}
where $t_k = \sum_{i=1}^{d}q_{i,k}$ denotes the number of observations taking on cluster $k$ across groups and $h_r = \sum_{i=1}^{d}w_{i,r}u_i$. Similarly, we have
\begin{equation}\label{eq:diri_psi}
\psi(u_1,...,u_d)  = \sum_{r=1}^{R} \log(h_r + 1).
\end{equation}
Insert the result of equation (\ref{eq:diri_tau}) into the updating formula of $c_{i,j}$ in equation (\ref{eq:updata_c}) gives the probability of $c_{i,j}$. The probability of $c_{i,j} = k$ for $k = 1,...,K$ is proportional to 
\begin{align*}
t_k f(x_{i,j}|\theta_k) \left(\sum_{r=1}^R  \frac{w_{i,r}^{q_{i,k}+1}\prod_{l=1,l\neq i}^{d}w_{l,r}^{q_{l,k}}}{(h_r + 1)^{t_k}}\right)\left(\sum_{r=1}^R  \frac{\prod_{l=1}^{d}w_{l,r}^{q_{l,k}}}{(h_r + 1)^{t_k}}\right)^{-1},
\end{align*}
and the probability of $c_{i,j}$ is not equal to any $k$ is proportional to 
\begin{align*}
\alpha \int f(x_{i,j}|\theta)H(d\theta)\sum_{r=1}^{R} \frac{w_{i,r}}{w_{i,r}u_i + 1}.
\end{align*}
Comparing with the updating formulas in the Dirichlet Process, the only difference is that there is an additional term in each of the above expressions. Besides this, the Dirichlet Process updating formula is restored if we simply set $R = 1$\footnote{We should also integrate out the auxiliary variables $u_i$, and that is possible when we set $R = 1$.}. In fact, the LMRM can be simplified into normalized random measures when $R = 1$, and how to derive the updating formulas of Dirichlet Process from normalized Gamma Process can be found in \cite{Favaro2013}.

The update of the auxiliary variables $u_1,...,u_d$ and the mixing weights $w_{i,r}$ for $i=1,...,d$ and $r = 1,...,R$ can be derived using the stochastic gradient MCMC proposed in \cite{Ma2015}. For reader's reference, we give the gradients of $u_i$ and $w_{i,r}$ in the below,
\begin{align*}
\frac{\partial p}{\partial u_i} &= \frac{n_i-1}{u_i} - \alpha \sum_{r=1}^{R} \frac{w_{i,r}}{h_r+1} - \sum_{k=1}^{K} \frac{\tau_{\boldsymbol{q}_k + \delta_i}(u_1,...,u_d)}{\tau_{\boldsymbol{q}_k}(u_1,...,u_d)},\\
\frac{\partial p}{\partial w_{i,r}} &= -\alpha \frac{u_i}{h_r + 1} + \sum_{k=1}^{K} \left(\frac{q_{i,k}}{w_{i,r}} - \frac{t_ku_i}{h_r + 1}\right) \left(\sum_{r'=1}^{R} \prod_{i=1}^{d} \left(\frac{w_{i,r'}}{w_{i,r}} \right)^{q_{i,k}}\left(\frac{h_{r'} + 1}{h_r+1}\right)^{t_k}\right),
\end{align*} 
where $p$ is the joint density of $u_{i}, \tau_{\boldsymbol{q}_k}(u_1,...,u_d)$ and $\psi(u_1,...,u_d)$. 
Another computational problem should be noted is that the  computation of $\tau_{\boldsymbol{q}_k}(u_1,...,u_d)$ often underflow since $h_r + 1 > 1$ and the frequency $t_k$ can be hundreds even thousands which depends on the size of the sample, hence the term $(h_r + 1)^{-t_k} \downarrow 0$ quickly. To solve this problem, we need to compute the fraction  $\tau_{\boldsymbol{q}_k  +\delta_i}(u_1,...,u_d) / \tau_{\boldsymbol{q}_k}(u_1,...,u_d)$ directly instead of working out the terms one by one. With some algebra, we have
\begin{align*}
\frac{\tau_{\boldsymbol{q}_k + \delta_i}(u_1,...,u_d)}{\tau_{\boldsymbol{q}_k}(u_1,...,u_d)} = t_{k} \sum_{r=1}^R \frac{w_{i,r}}{(h_r + 1)\sum_{r'=1}^{R} \left(\frac{h_r+1}{h_{r'}+1}\right)^{t_k} \prod_{i=1}^d \left(\frac{w_{i,r'}}{w_{i,r}}\right)^{q_{i,k}}}.
\end{align*}

\section{Illustration examples}\label{seq:exam}

Infinite Gaussian is one of the most important model in the clustering methods. The first infinite Gaussian model is an application of the Dirichlet Process \cite{Rasmussen2000}. This model extent the traditional finite mixture of Gaussians into a more flexible infinite mixtures of Gaussians, and thus remove the constraint that the number of Gaussians should be fixed in advance. Neal \cite{Neal2000} gave a full description of the mixture model with Dirichlet Process and the inference algorithms, including the original and accelerated form, the conjugate base measure and the non-conjugate base measure. The infinite Gaussians can also be simplified into the infinite k-means, which fixed the standard deviations and set $K\uparrow\infty$, see \cite{Kulis2011}. 

However, all of the above examples assume the base measure in the Dirichlet Process is a diffusion measure, then the atoms are different almost surely. When observation are organized in groups and we want some of the atoms are shared across groups, a dependent random measures should be applied. The hierarchical Dirichlet Process (HDP) \cite{Cai2013} is a famous model of tackling this problem. In this model, a discrete measure $\mu$ is first drawn from a Dirichlet Process, then the dependent measures $\tilde{\mu}_1,...,\tilde{\mu}_d$ are sampled from a Dirichlet Process with base measure $\mu$. To see a concrete example for the clustering with HDP, the readers can refer to \cite{Wang2013}. By definition, the NDRMI has such a proposition as well. In our example, we show how to cluster groups of observations with LMRM. Our example is inspired by Lijoi et al \cite{Lijoi2014a} which defined the mixed random measure as 
\begin{align*}
\tilde{\mu}_1^* &= \mu_0 + \mu_1, \\
\tilde{\mu}_2^* &= \mu_0 + \mu_2.
\end{align*}
They constructed the synthetic data set by assume $\mu_r$ to be a finite mixture of Gaussians for $r = 0,1,2$ and hence both $\tilde{\mu}_1^*$ and $\tilde{\mu}_2^*$ are mixtures of Gaussians. Since this model is just a special case of the LMRM, we use a similar synthetic data set and reasons will be explained below.

\subsection{Synthetic data}

We first simulate the CRMs by setting $\mu_1 = 1/2\mathcal{N}(-10,1) + 1/2\mathcal{N}(-5,1)$, $\mu_2 = 1/2\mathcal{N}(0,1) + 1/2\mathcal{N}(5,1)$ and $\mu_3 = 1/2\mathcal{N}(10,1) + 1/2\mathcal{N}(15,1)$, where $\mathcal{N}(a,b)$ denotes the Normal distribution with mean $a$ and standard deviation $b$. The $i$-th group of observations are sampled from 
\begin{align*}
\tilde{\mu}_i = w_{i,1} \mu_1 + w_{i,2} \mu_2 + w_{i,3} \mu_3, \quad i = 1,...,d.
\end{align*}
In our experiment, we set $d = 2$ and the size of the two groups are equal, which is $300$. For a better demonstration of the LMRM, we set $w_{1,1} = 0.3, w_{1,2} = 0.01, w_{1,3} = 0.69, w_{2,1} = 0.3, w_{2,2} = 0.69, w_{2,3} = 0.01$. See Figure \ref{fig:hist_data} the histogram of the sampled observations. The setting of the parameter $w_{i,r}$ is based on the following consideration. Since the LMRM $\tilde{\mu}_1$ and $\tilde{\mu}_2$ are integrated out in the inference algorithm (and consequently the CRMs $\mu_1,\mu_2,u_3$), we cannot recover them directly. Thus the weights $w_{i,r}$ ($i=1,...,d,r= 1,...,R$) do not need to be equal to the true values, and hence the inferring results are ambiguous even if the cluster centers are recovered. However, by setting one the mixing weights to be extreme values (close to $0$), at least one of the inferred mixing weights must be small enough to suppress the corresponding completely random measure. The effect of $R$ can also be seen by this setting, because when $R$ is smaller than the true value, it is obvious that no mixing weights will be suppressed. 

\begin{figure}
\centering
\begin{subfigure}
{\includegraphics[width = 0.45\textwidth]{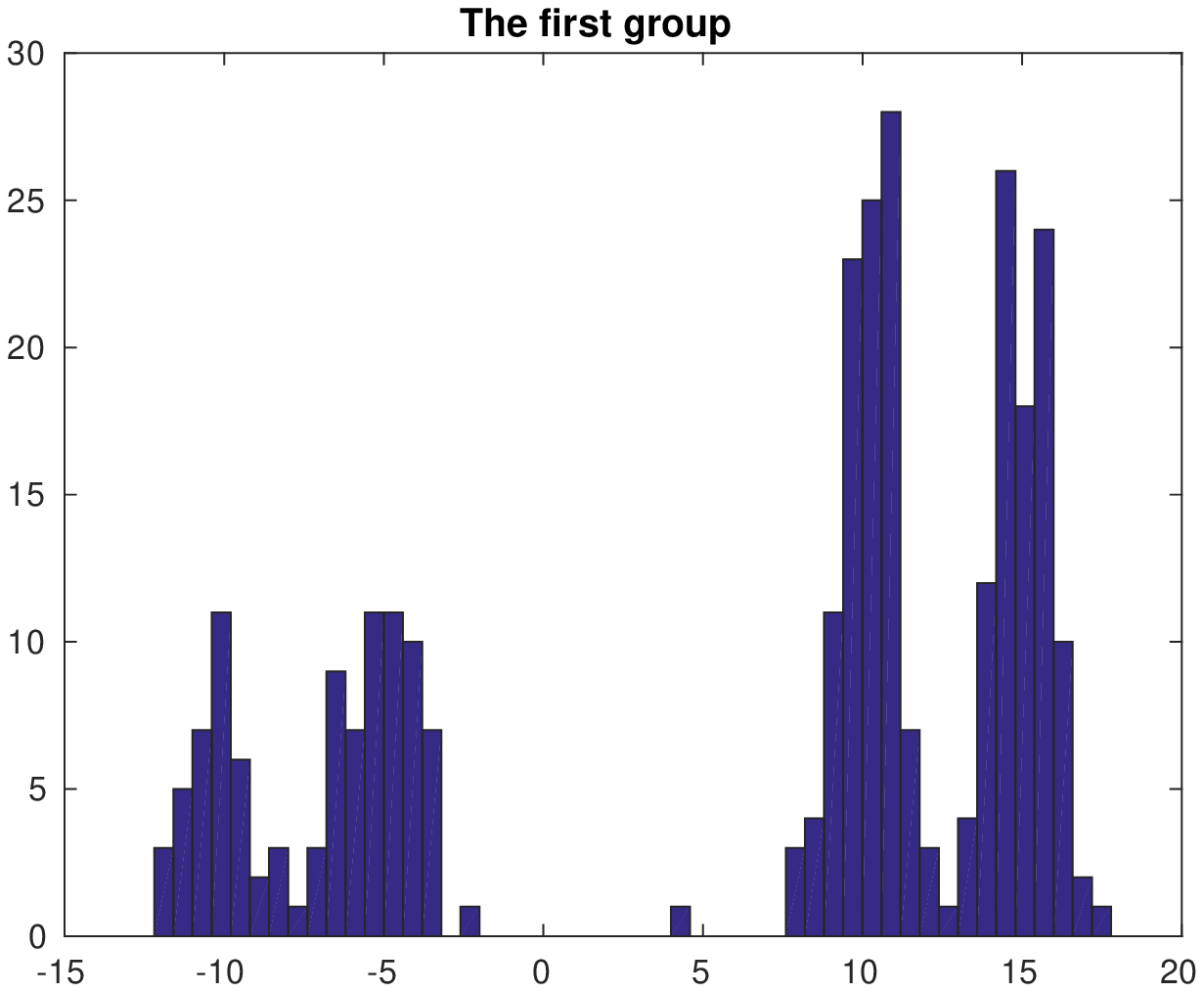}}
\end{subfigure}
\begin{subfigure}
{\includegraphics[width = 0.45\textwidth]{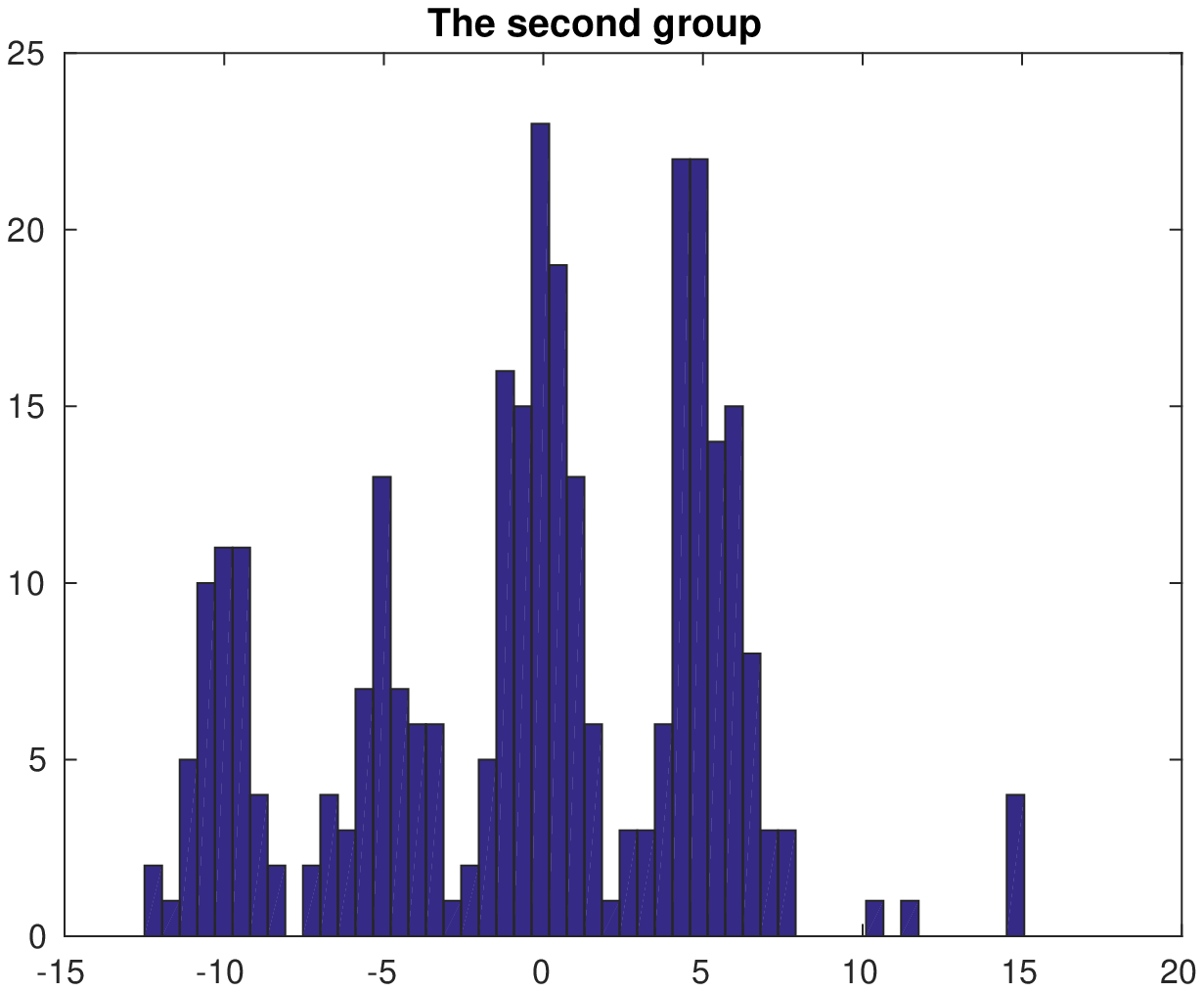}}
\end{subfigure}
\caption{The histogram of the two groups of observations. We can see that the first two clusters are shared by them, but the third and the fourth clusters are suppressed in the first group while the last two clusters are suppressed by the second group.} \label{fig:hist_data}
\end{figure}

In our experiments, we set $R$ to be $2,3$ and $4$ and observe the values of the mixing weights $w_{i,r}$, and the activated number of clusters. Initially, we set $w_{i,r} = u_i = 1$ for all of them, the standard deviation for each cluster is set to $\sigma_1 = 1$, and the base measure is set to be $\mathcal{N}(0,2.6)$, where $2.6$ is the approximate standard deviation for the sample. The parameter in the L\'evy measure of the Gamma Process is set to $\alpha = 0.005$. 

We test the performance of the model by running Gibbs sampling for 10000 iterations and discard the first 2000 iterations as burn in. The average values of the mixing weights and the number of clusters are shown in Table \ref{tab:r=3}. It can be seen that when $R = 3$ (the second row), the average number of clusters are approximately $6$, and one of the weights is very small in each group as our expectation. Further more, the proportions of the weights in each group are approximately equal to the true proportions ($0.01:0.35:0.69$). We also need to note that the weights $w_{1,1}$ is the largest in group 1 and $w_{2,1}$ is the smallest in group 2, indicating that the inferred first completely random measure is corresponding to $\mu_3$. Similarly, the  smallest and largest mixing weight in group 1 and 2 is $w_{1,2}$ and $w_{2,2}$, and this means the inferred second completely random measure is corresponding to $\mu_2$.  This fact complies with our assumption that the first group ignores one completely random measure while the second group ignores another one. 

When we set $R = 4$ (the third row of Table \ref{tab:r=3}), the average number of activated clusters is also approximately to equal to the true value. Similarly, in each group, there is one of the weights near $0$, and that means our inference correctly detects the structure of the data. As the same with $R=3$, the largest values and the smallest values are paired ($w_{1,1}$ v.s. $w_{2,1}$ and $w_{1,3}$ v.s. $w_{3,1}$).

When we set $R = 2$. Our inference cannot detect the structure of the sample since there are fewer assumed CRMs than the real settings. However, the results are much more interesting. Because it can be inferred that our algorithm split the shared completely random measure into two and allocate them to the two groups. Firstly, we need to note that the average number of activated clusters is approximately $8$, and this is the first clue. Then we dive into the inferred centers and the percentages of each cluster in each group. In Table \ref{tab:r=2}, we can see that the clusters in the first completely random measure are split. Cluster $3$ and 8 are actually one cluster and cluster 5 and 7 are another one. In fact, these two clusters are just the those in the first completely random measure and they are split so group 1 is assigned cluster 3 and 7 while group 2 is assigned cluster 5 and 8.

Combine these facts, we can see that when $R$ is set to be greater or equal to the real value of the sample, the algorithm can detect the right structure. But when $R$ is set to be too small, the algorithm will split the shared clusters and create redundant ones to suit the settings. 
\begin{table} 
\centering
\caption{The averages of the mixing weights and the number of activated clusters.}\label{tab:r=3}
\begin{tabular}{c|c|c|c|c|c|c|c|c|c}
\hline
&$w_{1,1}$ &$w_{1,2}$  &$w_{1,3}$  &$w_{1,4}$  &$w_{2,1}$  &$w_{2,2}$ &$w_{2,3}$  &$w_{2,4}$  &$K$  \\
\hline
$R = 2$ &2.1563 &0.0743 &  &  &0.0504  &2.0894 & &  &8.0785 \\
\hline
$R = 3$ &2.0379 &0.0635  &1.0038 &  &0.0346 &1.6804 &0.8997 & &6.0480 \\
\hline
$R = 4$ &2.3880 &1.0882 &0.0351 &1.0882 &0.0503 &1.3252 &2.3666  &11.3251 &6.1414\\
\hline
\end{tabular}
\end{table}

\begin{table}
\centering
\caption{The clusters and the percentages of each group.}\label{tab:r=2}
\begin{tabular}{c c | c c | c c }
\hline
& &\multicolumn{2}{c}{Group 1}  &\multicolumn{2}{c}{Group 2} \\
\hline
Cluster &Mean  &Count &Percent  &Count &Percent \\
\hline
   1    &5.0820      &1      &0.33\%    &104    &34.67\%\\
   2    &15.0416     &96     &32.00\%   &1      &0.33\%\\
   3    &-9.8725     &52     &17.33\%   &3      &1.00\%\\
   4    &-0.0499     &5      &1.67\%    &91     &30.33\%\\
   5    &-5.0027     &4      &1.33\%    &55     &18.33\%\\ 
   6    &9.9423      &104    &34.67\%   &3      &1.00\%\\
   7    &-5.0607     &38     &12.67\%   &2      &0.67\%\\
   8    &-10.0084    &0      &0.00\%    &41     &13.67\%\\
\hline
\end{tabular}
\end{table}

\subsection{Real data set}

Besides synthetic data set, we also test our algorithm in a clinical data set.  This data consists of drug information collected on 50 patients  used to perform frequency and descriptive statistics\footnote{The data set is available  \url{http://calcnet.mth.cmich.edu/org/spss/Prj_New_DrugData.htm}}.This data set comprises of 6 variables: Subject, Treatment, Age, Gender, Before\textunderscore exp\textunderscore BP and After\textunderscore exp\textunderscore BP, where Treatment is a binary variable with  1 for treatment and  0 for placebo, and Before\textunderscore exp\textunderscore BP and After\textunderscore exp\textunderscore BP are for blood pressure before and after experiment respectively. We construct the first group using the blood pressure for patients taking real pills and the second group for placebo. Figure \ref{fit:real_hist} shows the histograms of the two groups. It can be seen that the first group consists of three clusters with centers are approximately $85,95,105$ while the second group consists of two clusters with centers are approximately $91,98$. That is to say that one of the clusters is shared by the two groups. In fact, most of the patients have similar blood pressures (95 v.s. 98) before experiments except for a few people having extremely high blood pressure (105 to 115). But after the experiments, the patients taking real pills have a lower blood pressure than those having placebos (85 v.s. 91). 

We assume the observations are distributed as mixtures of Gaussians like the synthetic data. The standard deviation for each cluster is assumed to be 3 and the standard deviation for the base measure is assumed to be 6. We fix the concentration parameter $\alpha = 0.005$ and run the Gibbs sampling for 10000 iterations. The average values of the mixing weights are shown in Table \ref{tab:real_w}. It is clear that when we set $R=3$ or $R=4$, one of the mixing weight in group 2 ($w_{2,1}$ and $w_{2,3}$) is close to 0, meaning there is one completely random measure being ignored in the second group. However, when we set $R = 2$, all the mixing weights are relatively large, indicating that this setting cannot discover the structure of the sample. 

The centers and the percentages of each cluster is shown in Table \ref{tab:real_r=2}, Table \ref{tab:real_r=3} and \ref{tab:real_r=4} for $R=2$, $R=3$ and $R=4$ respectively. It is obvious that there is one cluster shared by the groups when we set $R=3$ and $R=4$. On the contrary, when we set $R=2$ the clusters mixed up and the true structure of the sample is not recovered.

\begin{figure}
\centering
\subfigure{\includegraphics[width = 0.45\textwidth]{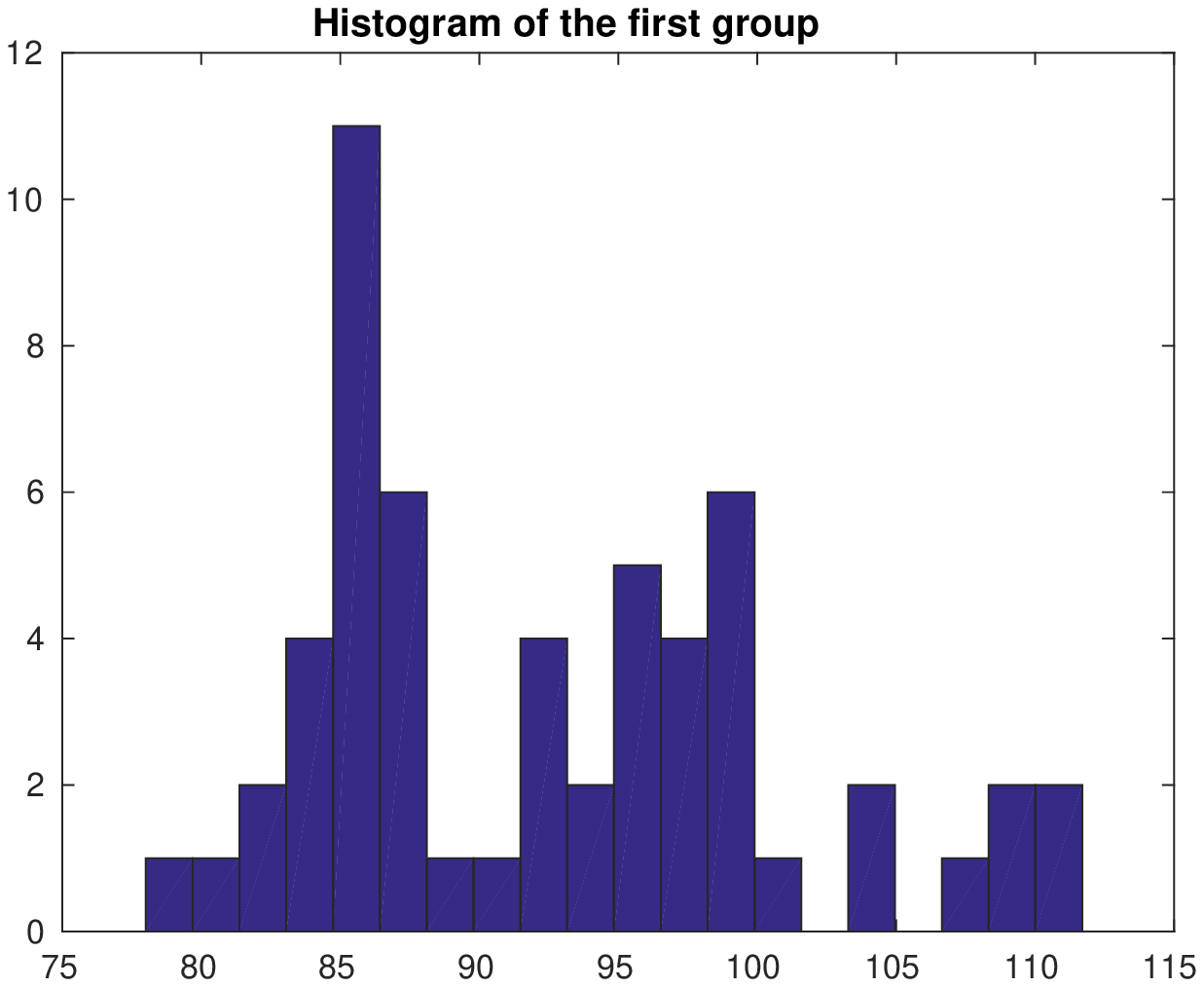}}
\subfigure{\includegraphics[width = 0.45\textwidth]{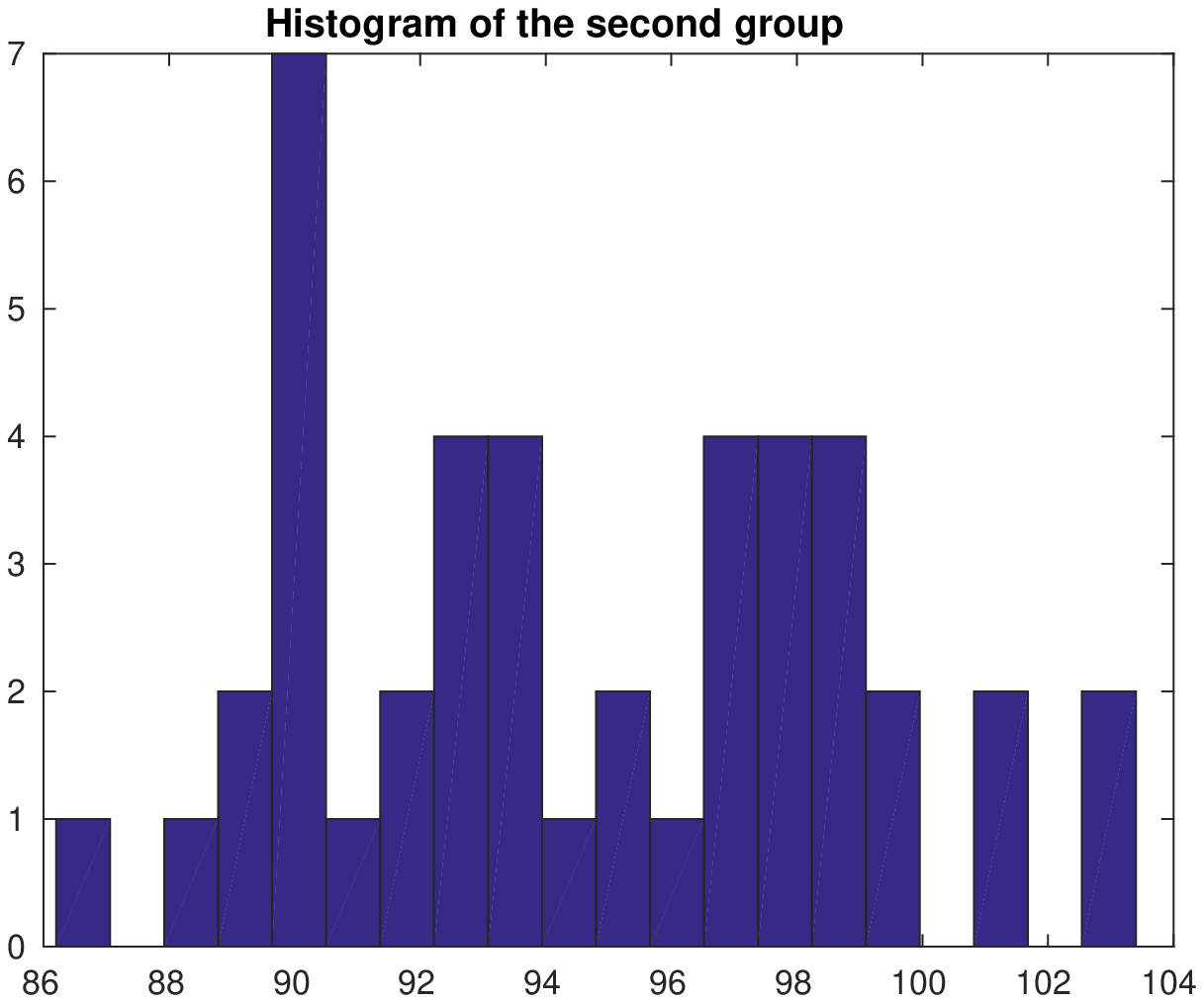}}
\caption{The histogram of the clinical data set. It can be seen that the first group consists of three clusters with centers approximately $85,95,105$ while the second group consists of two clusters with centers $91,98$.} \label{fit:real_hist}
\end{figure}

\begin{table}[!ht]
\centering
\caption{The average values of the mixing weights.}\label{tab:real_w}
\begin{tabular}{c|c|c|c|c|c|c|c|c|c}
\hline
 &$w_{1,1}$ &$w_{1,2}$ &$w_{1,3}$ &$w_{1,4}$ &$w_{2,1}$ &$w_{2,2}$ &$w_{2,3}$ &$w_{2,1}$ &$K$\\
 \hline
 $R=2$ & 0.9590 & 0.9300 & & & 6.0546 & 6.1913 & & & 3.0733\\ 
 \hline
 $R=3$ & 1.4742 & 0.3496 & 0.6308 & & 0.0012 & 1.1795 & 0.9358 & & 4.0183\\
 \hline
 $R=4$     & 0.5355    & 0.8963    & 1.4557    & 0.5148
     &  1.1265    & 3.6203    & 0.0000    & 1.0332 & 4.0251 \\
 \hline 
\end{tabular}
\end{table}

\begin{table}
\centering
\caption{The cluster information for $R = 2$}\label{tab:real_r=2}
\begin{tabular}{cc|cc|cc}
\hline
& &\multicolumn{2}{c}{Group 1}  &\multicolumn{2}{c}{Group 2} \\
\hline
Cluster &Mean  &Count &Percent  &Count &Percent \\
\hline
   1    &96.4534     &24     &42.86\%   &34     &77.27\%\\
   2    &86.5541     &27     &48.21\%   &10     &22.73\%\\
   3    &109.6400    &5      &8.93\%    &0      &0.00\%\\
\hline
\end{tabular}
\end{table}

\begin{table}
\centering
\caption{The cluster information for $R = 3$}\label{tab:real_r=3}
\begin{tabular}{cc|cc|cc}
\hline
& &\multicolumn{2}{c}{Group 1}  &\multicolumn{2}{c}{Group 2} \\
\hline
Cluster &Mean  &Count &Percent  &Count &Percent \\
\hline
   1    &92.3094     &6      &10.71\%    &26     &59.09\%\\
   2    &109.6400    &5      &8.93\%     &0      &0.00\%\\
   3    &85.3741     &27     &48.21\%    &0      &0.00\%\\
   4    &98.2722     &18     &32.14\%    &18     &40.91\%\\
\hline
\end{tabular}
\end{table}

\begin{table}
\centering
\caption{The cluster information for $R = 4$}\label{tab:real_r=4}
\begin{tabular}{cc|cc|cc}
\hline
& &\multicolumn{2}{c}{Group 1}  &\multicolumn{2}{c}{Group 2} \\
\hline
Cluster &Mean  &Count &Percent  &Count &Percent \\
\hline
   1    &97.3745     &20     &35.71\%    &27     &61.36\%\\
   2    &108.5833    &6      &10.71\%    &0      &0.00\%\\
   3    &85.5393     &28     &50.00\%    &0      &0.00\%\\
   4    &90.6211     &2      &3.57\%     &17     &38.64\%\\
\hline
\end{tabular}
\end{table}

\section{Conclusion and future work}\label{sec:conclusion}

In this paper, we have proposed  a framework for modeling mixture models when observations are organized in groups and the prior is a NDMRI.  We pointed out that when the L\'evy measure of the NDRMI is given, the EPPF can be derived analytically and hence the inference resembles that of the product of the Chinese restaurant process with  additional terms  $(u_1,...,u_d)$. As a special case, we have derived the L\'evy measure of the LMRM and applied the inference method to LMRM. We have subsequently proved its membership in the NDRMI class. Furthermore, we applied mixture of Gaussians likelihood when the prior is LMRM and showed in detail under the setting where its direction is in a form of a Gamma measure. This can be seen as a multi-variational Dirichlet Process(es).

In terms of experiments, we showed the performance and the superiority of our model from both synthetic and clinical data. In particular, we systematically evaluate our model under different $R$, i.e., the number of the assumed CRMs. It can be seen that when $R$ is greater or equal to the ground-truth, the inferred clustering information largely agrees with the true structure of the data samples, and across the board in all groups. We also noted that the clusters will split up or mixed together when $R$ is smaller than of the ground-truth.

In this paper, we discussed the LMRM when the directions are assumed to be Gamma Processes. However, the $\sigma$-stable Process, the generalized Gamma Process are believed to have a more stable property. For example, the power law of the partition functions \cite{Lijoi2007}, which is believed to be more suitable in real world scenarios. By using some straightforward mathematical derivations, it is easy to derive the functions functions $\tau_{\boldsymbol{q}_k}(u_1,...,u_d)$ and $\psi(u_1,...,u_d)$ and hence we can compare the performance of these processes with respect to that of  the Gamma Process. More generally, the L\'evy copula should be studied since we can define more general L\'evy measures for dependent random measures by virtual of it. The hierarchical Dirichlet process is another model to share atoms across groups through a common base measure. In the future, we will study the interesting mathematical relationship between HDP and the dependent random measures.

\section{Acknowledgements}

This work was supported by 973 Program of China [grant numbers 2014CB340401]; International Exchange Program for Graduate Students, Tongji University.

\bibliographystyle{plain}
\bibliography{My_Collection}

\begin{appendices}
\begin{proof}[Proof of Proposition \ref{prop}]
We rewrite equation (\ref{eq:EPPF_before}) to the form
\begin{align}\label{eq:before_prop}
\mathbb{P}(\mathbf{X}, Y_k \in C_{k}, ~ k=1,...,K)= \int_{(\mathbb{R}^+)^d}\mathbb{E}\left[ \prod_{i=1}^{d} \frac{u_i^{n_i-1}}{\Gamma(n_i)}  e^{-u_iT_i}  \prod_{k=1}^{K} \tilde{\mu}_i(C_k)^{q_{i,k}}\right] du_1,...,du_d ,
\end{align}
and split $\mathbb{X} =\cup_{k=0}^K C_k $, where $C_0 = \mathbb{X} - \left(\cup_{k=1}^K C_k \right)$. Then the total measure $T_i$ is decomposed into $T_i = \tilde{\mu}_i(\mathbb{X}) = \sum_{k=0}^{K} \tilde{\mu}_i(C_k)$. By the independent assumption for disjoint measurable sets $A_1,...,A_n$ that the random measures $\tilde{\mu}_{i_1}(A_1),...,\tilde{\mu}_{i_n}(A_n)$ are mutually independent no matter $i_j$ are equal or not for $j = 1,...,n$. The expectation in equation (\ref{eq:before_prop}) is changed to
\begin{align*}
\mathbb{E}\left[ \prod_{i=1}^{d} \frac{u_i^{n_i-1}}{\Gamma(n_i)}  e^{-u_iT_i}  \prod_{k=1}^{K} \tilde{\mu}_i(C_k)^{q_{i,k}}\right] &= \left( \prod_{i=1}^{d} \frac{u_i^{n_i-1}}{\Gamma(n_i)} \right) \mathbb{E}\left[e^{-\sum_{i=1}^{d}u_i \tilde{\mu}_i(C_0)} \right] \times \\
&~~~~~~~~~~~~~~~~~~~~~~~~~~\prod_{k=1}^{K} \mathbb{E}\left[e^{-\sum_{i=1}^{d} u_i \tilde{\mu}_i(C_k)}\left( \prod_{i=1}^{d} \tilde{\mu}_i(C_k)^{q_{i,k}}\right)\right].
\end{align*}
We apply the multivariate Fa\`a di Bruno formula \cite{Constantine1996} on the last term to get
\begin{align*}
&\mathbb{E}\left[e^{-\sum_{i=1}^{d} u_i \tilde{\mu}_i(C_k)}\left( \prod_{i=1}^{d} \tilde{\mu}_i(C_k)^{q_{i,k}}\right)\right] = \\
&~~~~~~~~~~~~~~~=(-1)^{n_i} \frac{\partial^{n_i}}{\partial u_1^{q_{1,k}}\cdots \partial u_K^{q_{i,K}}} \exp\left\{-\int_{\mathbb{X}\times (\mathbb{R}^+)^d}(1 - e^{-\sum_{i=1}^{d}s_i})\nu(ds_1,...,ds_d)H(C_k) \right\} \\
&~~~~~~~~~~~~~~~= \exp\left\{-\int_{\mathbb{X}\times (\mathbb{R}^+)^d}(1 - e^{-\sum_{i=1}^{d}s_iu_i})\nu(ds_1,...,ds_d)H(C_k) \right\} \\ &~~~~~~~~~~~~~~~~~~~~~~~~~~~~~~~~~~~~~~~~~~~~~~\times \left[\int_{\mathbb{X}\times (\mathbb{R}^+)^d}e^{-\sum_{i=1}^{d}s_iu_i}\prod_{i=1}^{d}s_i^{q_{i,k}}\nu(ds_1,...,ds_d)H(C_k) + o(H(C_k))\right]
\end{align*}
The second equation follows from the Fa\`a di Bruno formula and equation (\ref{eq:def_drmi}). Then the conclusion follows by setting $C_k \coloneqq C_{k,\epsilon} = \{y:d(Y_k,y) < \epsilon\}$ and letting $\epsilon \downarrow 0$ and inserting it back into equation (\ref{eq:before_prop}).
\end{proof}
\end{appendices}

\end{document}